  \providecommand\BibTeX{{%
    \normalfont B\kern-0.5em{\scshape i\kern-0.25em b}\kern-0.8em\TeX}}}
\DeclareMathOperator*{\argmin}{arg~min}
\newcommand{\textbfit}[1]{\textbf{\textit{#1}}}
\newcommand{\mE}{\mathbb{E}}
\newcommand{\mV}{\mathbb{V}}
\newcommand{\mP}{\mathbb{P}}
\newcommand{\tf}{f_{T} (\theta)}
\newcommand{\calD}{\mathcal{D}}
\newcommand{\ubf}{\hat{f}_{UB} \bigl(\theta; \{ \D_{S^j} \}_{j=1}^{k} \bigr)}
\newcommand{\vrf}{\hat{f}_{VR} \bigl(\theta; \{ \D_{S^j} \}_{j=1}^{k} \bigr)}
\newcommand{\lamf}{\hat{f}_{\boldsymbol{\lambda}} \bigl(\theta; \{ \D_{S^j} \}_{j=1}^{k} \bigr)}
\newcommand{\D}{\mathcal{D}}
\newcommand{\jsim}{Div \left( T \, || \, S^j \right)}
\newcommand{\ijx}{x_i^j}
\newcommand{\ijy}{y_i^j}
\newtheorem{theorem}{{\em Theorem}}
\newtheorem{definition}{{\em Definiton}}
\newtheorem{lemma}[theorem]{{\em Lemma}}
\newtheorem{assumption}{{\em Assumption}}
\newtheorem{proposition}[theorem]{{\em Proposition}}
\newcommand{\Eqref}[1]{Eq. (\ref{#1})}
\newcommand{\Asmref}[1]{Assumption \ref{#1}}
\newcommand{\Thmref}[1]{Theorem \ref{#1}}
\newcommand{\Tabref}[1]{Table~\ref{#1}}
\definecolor{dkgreen}{rgb}{0,0.6,0}
\definecolor{customgray}{rgb}{0.25,0.25,0.25}
\definecolor{customred}{rgb}{0.8,0.05,0.05}
\definecolor{customblue}{rgb}{0.05,0.05,0.8}
\newcommand{\best}[1]{\textcolor{customred}{\textbf{#1}}}
\newcommand{\std}[2]{#1 \textcolor{customgray}{\small{$\pm$#2}}}
\begin{document}
\fancyhead{}

\title{Efficient Hyperparameter Optimization\\ under Multi-Source Covariate Shift}

\author{Masahiro Nomura}
\authornote{Both authors contributed equally to this research.}
\email{nomura\_masahiro@cyberagent.co.jp}
\affiliation{%
  \institution{CyberAgent, Inc.}
  \state{Tokyo}
  \country{Japan}
}

\author{Yuta Saito}
\authornotemark[1]
\email{saito@hanjuku-kaso.com}
\affiliation{%
  \institution{Hanjuku-kaso, Co., Ltd.}
  \state{Tokyo}
  \country{Japan}
}

\renewcommand{\shortauthors}{Nomura and Saito}

\begin{abstract}
A typical assumption in supervised machine learning is that the train (source) and test (target) datasets follow completely the same distribution.
This assumption is, however, often violated in uncertain real-world applications, which motivates the study of learning under \textit{covariate shift}. 
In this setting, the naive use of adaptive hyperparameter optimization methods such as Bayesian optimization does not work as desired since it does not address the distributional shift among different datasets. 
In this work, we consider a novel hyperparameter optimization problem under the \textit{multi-source} covariate shift whose goal is to find the optimal hyperparameters for a target task of interest using only unlabeled data in a target task and labeled data in \textit{multiple} source tasks. 
To conduct efficient hyperparameter optimization for the target task, it is essential to estimate the target objective using only the available information. 
To this end, we construct the \textbfit{variance reduced estimator} that unbiasedly approximates the target objective with a desirable variance property.
Building on the proposed estimator, we provide a general and tractable hyperparameter optimization procedure, which works preferably in our setting with a no-regret guarantee. 
The experiments demonstrate that the proposed framework broadens the applications of automated hyperparameter optimization.
\end{abstract}

\begin{CCSXML}
<ccs2012>
   <concept>
       <concept_id>10010147.10010257</concept_id>
       <concept_desc>Computing methodologies~Machine learning</concept_desc>
       <concept_significance>300</concept_significance>
       </concept>
   <concept>
       <concept_id>10002951.10003227.10003351</concept_id>
       <concept_desc>Information systems~Data mining</concept_desc>
       <concept_significance>300</concept_significance>
       </concept>
 </ccs2012>
\end{CCSXML}

\ccsdesc[300]{Computing methodologies~Machine learning}
\ccsdesc[300]{Information systems~Data mining}


\maketitle

\section{Introduction}
\textit{Hyperparameter optimization} (HPO) has been a pivotal part of machine learning (ML) and contributed to achieving a good performance in a wide range of tasks~\citep{feurer2019hyperparameter}.
It is widely acknowledged that the performance of deep neural networks depends greatly on the configuration of the hyperparameters~\citep{dacrema2019we,henderson2018deep,lucic2018gans}.
HPO is formulated as a special case of a black-box function optimization problem, where the input is a set of hyperparameters, and the output is a validation score. 
Among the black-box optimization methods, adaptive algorithms, such as \textit{Bayesian optimization} (BO)~\citep{brochu2010tutorial,shahriari2015taking,frazier2018tutorial} have shown superior empirical performance compared with traditional algorithms, such as grid search or random search~\citep{bergstra2012random,turner2021bayesian}.

One critical, but often overlooked assumption in HPO is the \textbfit{availability of an accurate validation score}.
However, in reality, there are many cases where we cannot access the ground-truth validation score of the target task of interest. 
For example, in display advertising, predicting the effectiveness of each advertisement, i.e., \textit{click-through rates} (CTR), is important for showing relevant advertisements (ads) to users. 
Therefore, it is necessary to conduct HPO before a new ad campaign starts. 
However, for new ads that have not yet been displayed to users, one cannot use labeled data to conduct HPO. 
Moreover, suppose you are a doctor who wants to optimize a medical plan for patients in a new, \textbfit{target} hospital (hospital A) using ML.
However, you have only historical medical outcome data from some \textbfit{source} hospitals (hospitals B and C) different from the target one.
In these situations, we have to tune hyperparameters that lead to a good ML model with respect to the target task without having its labeled data.
These settings are considered as a generalization of \textbfit{covariate shift}~\cite{shimodaira2000improving,sugiyama2007covariate}.
Covariate shift is a prevalent setting for supervised machine learning in the wild, where the feature distribution is different in the train (source) and test (target) tasks, but the conditional label distribution given the feature vector remains unchanged.
In this case, the standard HPO procedure is infeasible, as one cannot utilize the labeled target task data and the true validation score of the ML model under consideration.

In this work, we extend the use of adaptive HPO algorithms to \textbfit{multi-source covariate shift} (\textsl{MS-CS}).
In this setting, we do \textbfit{not} have labeled data for a target task (e.g., medical outcomes in a new hospital). 
However, we do have data for some source tasks following distributions that are different from the target task (e.g., historical outcomes in some other hospitals).
The difficulty to conduct HPO under MS-CS is that the ground-truth validation score is inaccessible, as we cannot utilize the label information of the target task. 
It is thus essential to find a good estimator of the target objective, a performance of an ML model with respect to the target task, calculable with only available data. 

A natural candidate approach is \textit{importance sampling}~\cite{elvira2015efficient,sugiyama2007covariate}, which leads to an unbiased estimation by correcting the distributional shifts among different tasks.
In our problem, however, a mere IS can lead to a sub-optimal conclusion, because the \textbfit{variance} in the objective estimation can blow up due to the effect of some source tasks greatly different from the target task.
To address this problematic variance issue, we first argue that the variance is the key to lead to an efficient HPO procedure in MS-CS.
We then propose the \textbfit{variance reduced (VR) estimator}, which achieves the optimal variance among a class of unbiased estimators by upweighting informative samples, i.e., samples of source tasks that are similar to the target task, based on a \textbfit{task divergence measure}. 
We also show that our proposed estimator leads to a \textbfit{no-regret} HPO even under MS-CS.
Finally, empirical studies on synthetic and real-world datasets demonstrate that the proposed framework works properly compared to other possible heuristics.

\paragraph{\textbf{Related Work}.}
A typical HPO aims to find a better set of hyperparameters implicitly assuming that samples from the target task are available. 
As faster convergence is an essential performance metric of the HPO methods, the research community is moving on to the \textit{multi-source} or \textit{transfer} settings for which there are some previously solved related source tasks. 
By combining the additional source task information and the labeled target task dataset, one can improve the hyperparameter search efficiency, and thus reach a better solution with fewer evaluations~\citep{bonilla2008multi,bardenet2013collaborative,swersky2013mtbo,yogatama2014efficient,ramachandran2018information,springenberg2016bayesian,poloczek2017multi,wistuba2018scalable,feurer2018scalable,perrone2018scalable,perrone2019learning,salinas2020quantile,nomura2021warm}.
A critical difference between multi-source HPOs and our MS-CS is the \textbfit{existence of labels for the target task}.
Previous studies usually assume that analysts can utilize the labeled target data. 
However, as discussed above, this is often unavailable, and thus, most of these methods are infeasible in practice.


Another related field is \textbfit{model evaluation under covariate shift}, whose objective is to evaluate the performance of ML models with respect to the target task using only a relevant \textbfit{single} source dataset~\citep{sugiyama2007covariate,you2019towards,zhong2010cross}. 
These studies build on the \textit{importance sampling} (IS) method~\citep{elvira2015efficient,sugiyama2007covariate} to obtain an unbiased estimate of ground-truth model performance. 
While our proposed methods are also based on IS, a major difference is that we assume that there are \textit{multiple} source datasets following different distributions. 
We demonstrate that, in the multi-source setting, the previous IS method can fail and propose an estimator satisfying the optimal variance property.
Moreover, as these methods are specific to \textit{model evaluation}, the connection between the IS-based estimation techniques and automated HPO methods has not yet been explored despite their possible broad applications. 
Consequently, we are the first to theoretically show the no-regret guarantee of adaptive HPO under MS-CS and empirically evaluate its possible combination with the IS-based unbiased estimation.

\paragraph{\textbf{Contributions}.} 
Our contributions are summarized as follows:
\begin{itemize}
    \item We formulate a novel HPO setting under MS-CS.
    \item We construct the VR estimator, which achieves the optimal variance among a reasonable class of unbiased estimators.
    \item We show that our proposed estimator leads to a \textbfit{no-regret} HPO even under MS-CS where labeled target task data are unavailable.
    \item We empirically demonstrate that the proposed procedure works favorably in MS-CS setting. 
\end{itemize}

\begin{figure}
\centering
\includegraphics[width=0.99\linewidth]{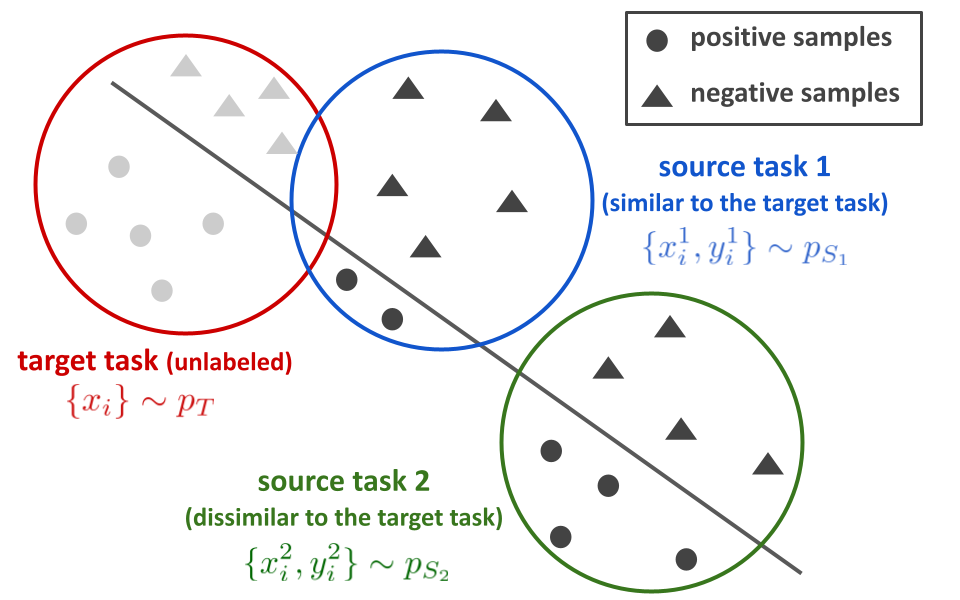}
\caption{\textit{Multi-Source} Covariate Shift (\textsl{MS-CS})}
\label{fig:gcs}
\vskip 0.05in
\begin{minipage}{\columnwidth} 
{\small {\it Note}: 
In MS-CS, we aim to find the best hyperparameters for the target task using unlabeled samples from the target task and labeled datasets from source tasks.
Some source tasks are similar to the target (e.g., source task 1), while others are dissimilar (e.g., source task 2). Note that the decision boundary is the same for all tasks (c.f., Assumption 2.).
\par}
\end{minipage}
\end{figure}

\section{Setup}
Here, we formulate HPO under MS-CS. 
Let $\mathcal{X} \subseteq \mathbb{R}^d$ be the $d$-dimensional input space and $\mathcal{Y} \subseteq \mathbb{R}$ be the real-valued output space. 
We use $p_T (x,y)$ to denote the joint probability density function of the input and output variables of the target task. 
The objective of this work is to find the best set of hyperparameters $\theta$ with respect to the target distribution:
\begin{align}
    \theta^{\ast} \in \argmin_{\theta \in \Theta} \tf, \label{eq:objective}
\end{align}
where $\Theta$ is a pre-defined hyperparameter search space and $\tf$ is the target task objective, which is defined as the generalization error over the target distribution:
\begin{align}
    \tf := \mE_{(X, Y) \sim p_T} \left[ L ( h_{\theta} (X), Y ) \right], \label{eq:target_task_objective}
\end{align}
where $L: \mathcal{Y} \times \mathcal{Y} \rightarrow \mathbb{R}_{\ge 0}$ is a bounded loss function such as the zero-one loss.
$h_{\theta}: \mathcal{X} \rightarrow \mathcal{Y}$ is an arbitrary machine learning model defined by a set of hyperparameters $\theta \in \Theta$, which predicts the output values using the input vectors.

In a standard hyperparameter optimization setting~\cite{bergstra2011algorithms,feurer2019hyperparameter,snoek2012practical}, labeled i.i.d. validation samples $ \calD_T^{labeled} := \{ x_i, y_i \}_{i=1}^{n_T} \sim p_T $ are available, and one can easily estimate the target objective in \Eqref{eq:target_task_objective} by the following empirical mean:
\begin{align}
    \hat{f}_T (\theta; \calD_T^{labeled}) := \frac{1}{n_T} \sum_{i=1}^{n_T} L ( h_{\theta} (x_i), y_i).  \label{eq:empirical_target_task_objective}
\end{align}
A hyperparameter optimization is conducted directly minimizing \Eqref{eq:empirical_target_task_objective} as a reasonable replacement for the ground-truth objective $\tf$ in \Eqref{eq:target_task_objective}.

In contrast, under MS-CS, labels of the target task are unobservable; we can use only \textbfit{unlabeled} target validation samples denoted as $\calD_T := \{ x_i \}_{i=1}^{n_T} $.
Instead, we can utilize \textit{multiple} source task datasets, i.e., $\{ \calD_{S_j} \}_{j=1}^{k}$ where $j$ is a source task index and $k$ denotes the number of source tasks. 
Each source task is defined as i.i.d. \textbfit{labeled} samples: $\calD_{S_j} := \{ \ijx, \ijy\}_{i=1}^{n_{S_j}} \sim p_{S_j} $ where $ p_{S_j} (x,y)$ is a joint probability density function that characterizes the source task $j$. 
Note here that the marginal input distributions of the target and source tasks are different, i.e., $p_T (x) \neq p_{S_j} (x), \; \forall j \in \{1, \ldots, k\}$.
Finally, let we use $n:=\sum_{j=1}^{k} n_{S_j}$ to denote the total number of samples in the source tasks.

Regarding the target and source distributions, we make the following assumptions.
\begin{assumption} Source tasks have support for the target task, i.e., $p_T (x) > 0 \Rightarrow p_{S_j} (x) > 0, \; \forall x \in \mathcal{X}, \, \forall j \in \{1, \ldots, k\} $.
\label{assumption1}
\end{assumption}

\begin{assumption} Conditional output distributions remain the same between the target and all the source tasks, i.e., $p_T (y | x) = p_{S_j} (y | x),$ $\; \forall j \in \{1, \ldots, k\} $.
\label{assumption2}
\end{assumption}

The above assumptions are common in the \textit{covariate shift} literature~\cite{shimodaira2000improving} and suggest that the input-output relation is the same, but the input distributions are different for the target and source task distributions.\footnote{These assumptions seem to be strict, but in fact, they are relatively reasonable given that the general HPO literature implicitly assumes that the train-test distributions are completely the same, i.e., $p_T (x,y) = p_{S_j} (x,y),\; \forall j \in \{1, \ldots, k\}$.}

One critical difficulty of MS-CS is that the simple approximation by the empirical mean in Eq.~(\ref{eq:empirical_target_task_objective}) is infeasible, as the labeled target dataset is unavailable.
It is thus essential to estimate the target task objective $f_T$ accurately using only an unlabeled target dataset and labeled multiple source datasets.

\section{Method} \label{sec:method}
In this section, we first define a class of unbiased estimators for the target objective by applying \textit{importance sampling} to our setting.
Any estimator within this class provides an unbiased target objective. However, optimizing some of them can be sub-optimal, as they can have a large variance when the target and source distributions differ greatly. To overcome the variance issue, we propose the \textbfit{variance reduced estimator}, which achieves the minimum achievable variance among the class of unbiased estimators. 

\subsection{A Class of Unbiased Estimators} \label{sec:unbiased_estimators}
A natural method to approximate the target task objective under distributional shifts is to use \textit{importance sampling}~\cite{shimodaira2000improving}.
It addresses the distributional shifts among different tasks using the following density ratio.
\begin{definition} (Density Ratio) For any $(x,y) \in \mathcal{X} \times \mathcal{Y}$ with a positive source density $p_{S_j} (x,y) > 0$, the density ratio between the target and the source task distributions is
\begin{align*}
    w_{S_j} (x, y) := \frac{p_T (x, y)}{p_{S_j} (x,y)} = \frac{p_T (x)}{p_{S_j} (x)} = w_{S_j}(x)
\end{align*}
where $w_{S_j}(x) \in [0,C] $ for a positive constant $C$. The equalities are derived from \Asmref{assumption2}.
\end{definition}

Using the density ratio, we define a class of unbiased estimators for the target task objective.
\begin{definition} ($\boldsymbol{\lambda}$-unbiased Estimator) 
For a given set of hyperparameters $\theta \in \Theta$, the class of $\boldsymbol{\lambda}$-unbiased estimators for the target task objective is defined as
\begin{align}
  \lamf := \sum_{j=1}^{k}  \lambda_j \sum_{i=1}^{n_{S_j}} w_{S_j} (\ijx) \cdot L ( h_{\theta} (\ijx), \ijy ),  \label{eq:lam_ub}
\end{align}
where $ \boldsymbol{\lambda} = \{ \lambda_1, \ldots \lambda_{k}\} $ is a set of weights for source tasks that satisfies $ \lambda_j \ge 0 $ and $\sum_{j=1}^{k} \lambda_j n_{S_j}  = 1  $ for all $j \in \{1, \ldots k\}$.
\end{definition}

This estimator is a generalization of the \textit{importance weighted cross-validation}~\cite{sugiyama2007covariate} to the multi-source setting and is statistically unbiased for the ground-truth target task objective, i.e., for any given $\theta$ and $\boldsymbol{\lambda}$, we have $ \mE [ \lamf ] = \tf$. This is derived as follows.
\begin{align*}
    \mE & \left[ \hat{f}_{\boldsymbol{\lambda}} \left(\theta; \{ \D_{S^j} \}_{j=1}^{k}\right) \right] \\
    & = \sum_{j=1}^{k}  \lambda_j \sum_{i=1}^{n_{S^j}} \mE_{(X, Y) \sim p_{S^j}} \left[ w_{S^j} (X) \cdot L ( h_{\theta} (X), Y )  \right] \\
    & = \sum_{j=1}^{k}  \lambda_j \sum_{i=1}^{n_{S^j}} \mE_{(X, Y) \sim p_{S^j}} \left[ \frac{p_T (X, Y)}{p_{S^j} (X,Y)} \cdot L ( h_{\theta} (X), Y )  \right] \\
    & = \sum_{j=1}^{k}  \lambda_j \sum_{i=1}^{n_{S^j}} \mE_{(X, Y) \sim p_T} \left[ L ( h_{\theta} (X), Y )  \right] \\
    & = \sum_{j=1}^{k}  \lambda_j \sum_{i=1}^{n_{S^j}} \tf \\
    & = \tf.
\end{align*}

Using an instance of the class of $\boldsymbol{\lambda}$-unbiased estimators is a valid approach for approximating the target task objective because of its unbiasedness.
However, a question that comes up to our minds here is: 

\begin{quote}
    \textbfit{What specific estimator should we use among the class of $\boldsymbol{\lambda}$-unbiased estimators?}
\end{quote}

A key property to answer this question is the \textbfit{variance} of the estimators.
For instance, let us define an immediate \textit{unbiased estimator} $\ubf := n^{-1} \sum_{j=1}^{k}  \sum_{i=1}^{n_{S_j}} w_{S_j} (\ijx) L ( h_{\theta} (\ijx), \ijy )$. 
This is a version of the $\boldsymbol{\lambda}$-unbiased estimators with $\lambda_j = n_{S_j}/n$ for every $j$.
We investigate the variance of this instance of the $\boldsymbol{\lambda}$-unbiased estimators below.
\begin{proposition} (Variance of $\hat{f}_{UB}$) For a given set of hyperparameter $\theta \in \Theta$, the variance of $\hat{f}_{UB}$ is
\begin{align}
    &\mV \left( \ubf \right) \nonumber \\
    &= \frac{1}{n^2} \sum_{j=1}^{k}  n_{S^j} \left( \mE_{(X, Y) \sim p_{S^j} } \left[  w_{S^j}^2(X) \cdot L^2 ( h_{\theta} (X), Y )  \right] - ( \tf )^2   \right) \label{var_ub}
\end{align}

\begin{proof}
Because the samples are independent, the variance can be represented as follows.
\begin{align*}
   \mV \left( \ubf \right)  
   & =  \frac{1}{n^2} \sum_{j=1}^{k}  \sum_{i=1}^{n_{S^j}} \mV \left( w_{S^j} (X) \cdot L ( h_{\theta} (X), Y ) \right) \\
   & =  \frac{1}{n^2} \sum_{j=1}^{k}  n_{S^j} \cdot \mV \left( w_{S^j} (X) \cdot L ( h_{\theta} (X), Y ) \right)
\end{align*}
where $\mV \left( w_{S^j} (X) \cdot L ( h_{\theta} (X), Y ) \right)$ can be decomposed as 
\begin{align*}
    \mV & \left( w_{S^j} (X) \cdot L ( h_{\theta} (X), Y ) \right) = \mE_{(X, Y) \sim p_{S^j}} \left[ w^2_{S^j} (X) \cdot L^2 ( h_{\theta} (X), Y ) \right] \nonumber \\
        & -  \left( \mE_{(X, Y) \sim p_{S^j}} \left[  w_{S^j} (X) \cdot L ( h_{\theta} (X), Y )  \right] \right)^2,
\end{align*}
From unbiasedness, $\mE_{(X, Y) \sim p_{S^j}} \left[  w_{S^j} (X) \cdot L ( h_{\theta} (X), Y )  \right]  = \tf$.
Thus, we obtain \Eqref{var_ub}.
\end{proof}
\label{prop_var_of_ub}
\end{proposition}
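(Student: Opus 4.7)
The plan is to exploit the independence of all samples, both within a single source task and across different source tasks, so that the variance of the sum becomes the sum of the variances. Concretely, I would first rewrite
\[
\ubf = \frac{1}{n} \sum_{j=1}^{k} \sum_{i=1}^{n_{S_j}} Z_i^j, \quad \text{where } Z_i^j := w_{S_j}(\ijx) \cdot L(h_\theta(\ijx), \ijy),
\]
and note that by \Asmref{assumption1} together with the i.i.d.\ assumption on each $\D_{S_j}$ and the assumed independence across source tasks, the $Z_i^j$ are mutually independent. Consequently $\mV(\ubf) = n^{-2} \sum_{j=1}^{k} \sum_{i=1}^{n_{S_j}} \mV(Z_i^j)$, and since the $n_{S_j}$ samples inside task $j$ are identically distributed we collapse the inner sum to $n_{S_j} \cdot \mV(Z_1^j)$.

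Next I would apply the standard identity $\mV(Z_1^j) = \mE[(Z_1^j)^2] - (\mE[Z_1^j])^2$. The first moment piece is immediate: the calculation already carried out to establish unbiasedness of the $\boldsymbol{\lambda}$-unbiased estimators shows, term by term, that $\mE_{(X,Y) \sim p_{S_j}}[w_{S_j}(X) \cdot L(h_\theta(X), Y)] = \tf$, so the subtracted square is exactly $(\tf)^2$. The second moment piece is simply left in its expectation form $\mE_{(X,Y) \sim p_{S_j}}[w_{S_j}^2(X) \cdot L^2(h_\theta(X), Y)]$, which is well-defined because $w_{S_j}$ is bounded by the constant $C$ from Definition 1 and $L$ is a bounded loss. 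Collecting everything gives exactly the claimed expression.

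I do not expect any real obstacle here: this is essentially a bookkeeping exercise in the variance-of-a-sum formula. The only two places that deserve a line of justification are (i) mutual independence of the $Z_i^j$ across different source tasks $j$, which is a modeling assumption implicit in treating the $\D_{S_j}$ as independently collected datasets, and (ii) finiteness of the second moment, which follows from boundedness of $w_{S_j}$ and $L$. Once those are noted, everything else is algebra, and reusing the unbiasedness computation already established avoids any redundant work.
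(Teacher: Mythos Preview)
Your proposal is correct and follows essentially the same approach as the paper: independence of samples to split the variance, i.i.d.\ within each task to collapse the inner sum, the identity $\mV(Z)=\mE[Z^2]-(\mE[Z])^2$, and the per-term unbiasedness computation to identify the subtracted square as $(\tf)^2$. One small correction: \Asmref{assumption1} concerns support (so that $w_{S_j}$ is well-defined), not independence---the mutual independence of the $Z_i^j$ comes solely from the i.i.d.\ sampling within each $\D_{S_j}$ and the assumed independence across source datasets.
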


The problem here is that, according to Proposition~\ref{prop_var_of_ub}, its variance depends on the square value of the density ratio function, which can be large when there is a source task having a distribution that is dissimilar to that of the target task. 

To illustrate this variance problem, we use a toy example given by $\{ x_1, x_2 \} \subseteq \mathcal{X}, \,  \{y_1, y_2\} \subseteq \mathcal{Y}, \, p(y_1 | x_1 ) = p(y_2 | x_2 ) = 1, \,  p(y_2 | x_1 ) = p(y_1 | x_2 ) = 0$. 
The loss values for possible tuples and the probability densities of the target and two source tasks are presented in \Tabref{tab1}. 
It shows that the target task $T$ is similar to the source task $S^2$, but its distribution is significantly different from that of $S^1$. For simplicity and without loss of generality, suppose there are two source task datasets given by $ \D^1 = \{(x_1^1, y_1^1)\} $ and $ \D^2 = \{(x_1^2, y_1^2)\} $. 
Then from \Eqref{var_ub}, the variance of the unbiased estimator is about $64.27$. 
Intuitively, this large variance is a result of the large variance samples from $ S^1 $. 
In fact, by dropping the samples of $S^1$ reduces the variance to $4.27$. 
From this example, we know that the unbiased estimator fails to make the most of the source tasks, and there is room to improve its variance by down-weighting the source tasks that are dissimilar to the target task.

\begin{table}[ht]
\centering
\caption{Dropping data samples from $S^1$ significantly lowers the variance of the unbiased estimator}
\def\arraystretch{1.1}
\begin{tabular}{lcc}
\toprule
 & $ (x_1, y_1) $ & $ (x_2, y_2) $    \\ \midrule
loss function: $ L (h_{\theta} (x), y) $ & 10 & 1     \\ \midrule
target task ($T$) distribution: $ p_T (x,y) $ & 0.8 & 0.2     \\
source task ($S^1$) distribution: $ p_{S^1} (x,y) $ & 0.2 & 0.8  \\
source task ($S^2$) distribution: $ p_{S^2} (x,y) $ & 0.9 & 0.1   \\ \bottomrule
\end{tabular}
\label{tab1}
\end{table}

\subsection{Variance Reduced Objective Estimator}
To deal with the potential large variance of the $\boldsymbol{\lambda}$-unbiased estimators, we consider re-weighting samples from various source tasks based on their similarity to the target task. 
This builds on the idea that samples from source tasks close to the target task are much more informative in HPO than samples from dissimilar source tasks.
To lead to a more reasonable estimator, we first define a \textbfit{task divergence measure}, which quantifies the similarity between the two tasks below. 
\begin{definition}(Task Divergence Measure)
The divergence between a source task distribution $p_{S_j}$ and the target task distribution $p_T$ is defined as
\begin{align}
    \jsim := \mE_{p_{S_j}} \left[w_{S_j}^2(X) \cdot L^2 ( h_{\theta} (X), Y )  \right] - ( \tf )^2 \label{task_divergence}
\end{align}
\end{definition}

This task divergence measure is large when the corresponding source distribution deviates significantly from the target task distribution. 
Building on this measure, we define the following estimator of the target task objective.
\begin{definition} (Variance Reduced Estimator) 
For a given set of hyperparameters $\theta \in \Theta$, the variance reduced estimator of the target task objective function is defined as
\begin{align}
   \vrf = \sum_{j=1}^{k}  \lambda^{\star}_j \sum_{i=1}^{n_{S^j}} w (\ijx) \cdot L ( h_{\theta} (\ijx), \ijy ) \label{vr}
\end{align}
where VR stands for variance reduced, $\D_{S^j}$ is any sample size $n_{S^j}$ of the i.i.d. samples from source task $j$, and $w_{S^j} (\cdot)$ is the true density ratio.
$\lambda^{\star}_j$ is a weight for source task $j$, which is defined as $$\lambda^{\star}_j =  \left( \jsim \sum_{j=1}^{k} \frac{n_{S^j}}{\jsim} \right)^{-1}$$
Note that, for all $j \in \{1, \ldots k\}$, $ \lambda^{\star}_j \ge 0 $ and $\sum_{j=1}^{k} \lambda^{\star}_j n_{S^j}  = 1 $.
\end{definition}
The variance reduced estimator in \Eqref{vr} is statistically unbiased for the ground-truth target task objective in \Eqref{eq:target_task_objective}, i.e., for any given $\theta$, $ \mE [ \vrf ] = \tf$, as it is an instance of the $\boldsymbol{\lambda}$-unbiased estimators.

Then, we demonstrate that the variance reduced estimator in \Eqref{vr} is \textbfit{optimal} in the sense that any other convex combination of a set of weights $\boldsymbol{\lambda} = \{ \lambda_1, \ldots \lambda_{k}\}$ that satisfies unbiasedness cannot provide a smaller variance.

\begin{theorem} (Variance Optimality; Extension of Theorem 6.4 of~\citep{agarwal2017effective}) For any given set of weights $ \boldsymbol{\lambda} = \{ \lambda_1, \ldots \lambda_{k}\} $ that satisfies $ \lambda_j \ge 0 $ and $\sum_{j=1}^{k} \lambda_j n_{S^j}  = 1  $ for all $j \in \{1, \ldots k\}$, the following inequality holds
\begin{align*}
    \mV \left( \vrf \right) \le  \mV \left( \hat{f}_{\boldsymbol{\lambda}} \left(\theta; \{\D_{S^j} \}_{j=1}^{k} \right) \right)
\end{align*}
where $ \hat{f}_{\boldsymbol{\lambda}} (\theta; \{\D_{S^j} \}_{j=1}^{k}) = \sum_{j=1}^{k}  \lambda_j \sum_{i=1}^{n_{S^j}} w (\ijx) \cdot L ( h_{\theta} (\ijx), \ijy )$.
\label{theorem1}

\begin{proof}
By following the same logic flow as in Proposition~\ref{prop_var_of_ub}, the variance of the $\boldsymbol{\lambda}$-unbiased estimator in \Eqref{eq:lam_ub} is
\begin{align}
   &\mV \left( \hat{f}_{\boldsymbol{\lambda}} \left(\theta; \{ \D_{S^j} \}_{j=1}^{k}\right) \right) \nonumber \\
   & =  \sum_{j=1}^{k} \lambda_j^2 n_{S^j}  \left( \mE_{(X, Y) \sim p_{S^j}} \left[ w^2_{S^j} (X) \cdot L^2 ( h_{\theta} (X), Y ) \right] - \left( \tf \right)^2 \right) \notag \\
   & = \sum_{j=1}^{k} \lambda_j^2  n_{S^j} \cdot \jsim \notag 
\end{align}
Thus, by replacing $\lambda_j$ for $ \left( \jsim \sum_{j=1}^{k} \frac{n_{S^j}}{\jsim} \right)^{-1} $, we have
\begin{align*}
    \mV \left( \hat{f}_{\boldsymbol{\lambda}} \left(\theta; \{ \D_{S^j} \}_{j=1}^{k}\right) \right) 
    & = \sum_{j=1}^{k} \left(\sum_{j=1}^{k} \frac{n_{S^j}}{\jsim} \right)^{-2} n_{S^j}  \cdot \jsim \\
    & = \sum_{j=1}^{k} \frac{n_{S^j} \jsim}{(\jsim)^2 (\sum_{j=1}^{k} \frac{n_{S^j}}{\jsim})^2 } \\
   & = \left(\sum_{j=1}^{k} \frac{n_{S^j}}{\jsim} \right) \left(\sum_{j=1}^{k} \frac{n_{S^j}}{\jsim} \right)^{-2} \\
   & = \left(\sum_{j=1}^{k} \frac{n_{S^j}}{\jsim} \right)^{-1}
\end{align*}
Then, for any set of weights $ \boldsymbol{\lambda} = \{ \lambda_1, \ldots \lambda_{k}\} $, we obtain the following variance optimality using the Cauchy-Schwarz inequality.
\begin{align*}
    & \left( \sum_{j=1}^{k} \lambda_j^2  n_{S^j} \cdot \jsim \right) \left( \sum_{j=1}^{k} \frac{n_{S^j}}{\jsim} \right)
    \ge \left( \sum_{j=1}^{k} \lambda_j  n_{S^j} \right)^2 = 1 \\
    & \Longrightarrow \left( \sum_{j=1}^{k} \lambda_j^2  n_{S^j} \cdot \jsim \right) \ge \left(\sum_{j=1}^{k} \frac{n_{S^j}}{\jsim} \right)^{-1} \\
    & \Longrightarrow  \mV \left( \hat{f}_{\boldsymbol{\lambda}} \left(\theta; \{ \D_{S^j} \}_{j=1}^{k}\right) \right) \ge \mV \left( \vrf \right) 
\end{align*}
\end{proof}
\end{theorem}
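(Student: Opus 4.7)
The plan is to recast the claim as a finite-dimensional constrained optimization problem and then solve it via the Cauchy--Schwarz inequality. First, I would derive a closed form for the variance of a generic $\boldsymbol{\lambda}$-unbiased estimator by replicating the independence argument used in Proposition~\ref{prop_var_of_ub}: samples across (and within) source tasks are independent, so the variance of the weighted sum in \Eqref{eq:lam_ub} decomposes as
$$\mV \left( \lamf \right) = \sum_{j=1}^{k} \lambda_j^2 \, n_{S^j} \cdot \mV \bigl( w_{S^j}(X) \cdot L(h_\theta(X), Y) \bigr),$$
and the per-task variance on the right collapses exactly to $\jsim$ by the unbiasedness of each importance-weighted loss combined with the definition of the task divergence.

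Given this form, the theorem reduces to the purely algebraic statement that $\lambda_j^{\star}$ minimizes $\sum_{j=1}^{k} \lambda_j^2 \, n_{S^j} \, \jsim$ subject to $\sum_{j=1}^{k} \lambda_j \, n_{S^j} = 1$ and $\lambda_j \ge 0$. I would apply Cauchy--Schwarz to the vectors whose $j$-th entries are $\lambda_j \sqrt{n_{S^j} \, \jsim}$ and $\sqrt{n_{S^j} / \jsim}$, respectively, which yields
$$\left( \sum_{j=1}^{k} \lambda_j^2 \, n_{S^j} \, \jsim \right) \left( \sum_{j=1}^{k} \frac{n_{S^j}}{\jsim} \right) \ge \left( \sum_{j=1}^{k} \lambda_j \, n_{S^j} \right)^{2} = 1,$$
so every feasible $\boldsymbol{\lambda}$ has variance at least $\bigl( \sum_{j=1}^{k} n_{S^j} / \jsim \bigr)^{-1}$.

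The final step is to verify that this lower bound is attained by $\lambda_j^{\star}$. Cauchy--Schwarz is tight precisely when the two vectors are proportional, which forces $\lambda_j \propto 1/\jsim$; enforcing the normalization $\sum_j \lambda_j n_{S^j} = 1$ then recovers the formula for $\lambda_j^{\star}$ appearing in the definition of $\vrf$. Direct substitution should collapse the variance expression to $\bigl( \sum_{j=1}^{k} n_{S^j} / \jsim \bigr)^{-1}$, matching the lower bound exactly.

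I do not anticipate a serious obstacle; the only creative step is choosing the right pairing for Cauchy--Schwarz, but once the variance is written as $\sum_j \lambda_j^2 n_{S^j} \jsim$ against the linear constraint $\sum_j \lambda_j n_{S^j} = 1$, splitting $n_{S^j}$ and $\jsim$ symmetrically between the two vectors is essentially forced. As a sanity check, a Lagrange-multiplier calculation with a single equality constraint yields the same stationary point $\lambda_j \propto 1/\jsim$, and the nonnegativity constraint is automatically satisfied since both $\jsim$ and $n_{S^j}$ are nonnegative.
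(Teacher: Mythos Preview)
Your proposal is correct and follows essentially the same route as the paper: derive the variance of a generic $\boldsymbol{\lambda}$-unbiased estimator as $\sum_{j} \lambda_j^2 n_{S^j}\,\jsim$ via independence, then apply Cauchy--Schwarz with the vectors $\lambda_j\sqrt{n_{S^j}\jsim}$ and $\sqrt{n_{S^j}/\jsim}$ to obtain the lower bound $\bigl(\sum_j n_{S^j}/\jsim\bigr)^{-1}$, and finally check that $\lambda_j^{\star}$ attains it. The only cosmetic difference is the order---the paper plugs in $\lambda_j^{\star}$ first and then invokes Cauchy--Schwarz, whereas you derive the bound first and then identify the equality case---but the argument is identical.
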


\Thmref{theorem1} suggests that the variance reduced estimator achieves a desirable variance property by weighting each source task based on its divergence from the target task.

Let us now return to the toy example in \Tabref{tab1}.\footnote{We refer to~\citep{agarwal2017effective} to create the toy example.}
The values of the divergence measure for $S^1$ and $S^2$ are $Div(T || S^1)=252.81$ and $Div(T || S^2)=4.27$, respectively. This leads to the weights of $ \lambda_1^{\star} \approx  0.017$ and $ \lambda_2^{\star} \approx 0.983$. 
Then, we have $4.21 \; (\text{variance of } \hat{f}_{VR}) > 4.27 \; (\text{variance when $S^1$ is dropped}) >64.27 \; (\text{variance of } \hat{f}_{UB}) $.
It is now obvious that the variance reduced estimator $\hat{f}_{VR}$ performs much better than the unbiased estimator by optimally weighting all available source tasks.

\begin{algorithm*}[t]
\caption{Hyperparameter optimization procedure for the MS-CS setting}
\begin{algorithmic}[1]
\Require unlabeled target task dataset $\mathcal{D}_T = \{x_i\}_{i=1}^{n_T}$; labeled source task datasets  $ \{ \mathcal{D}_{S^j} = \{ x_i^j, y_i^j \}_{i=1}^{n_{S^j}} \}_{j=1}^{k}$; hyperparameter search space $\Theta$; a machine learning model $h_{\theta}$; a target task objective estimator $\hat{f}$, a hyperparameter optimization algorithm $\textbf{OPT}$
\For{ $j \in \{1, \ldots, k\}$ }
    \State Split $\mathcal{D}_{S^j}$ into three folds $\mathcal{D}_{S^j}^{density} $, $\mathcal{D}_{S^j}^{train}$, and $\mathcal{D}_{S^j}^{val}$
    \State Estimate density ratio $ w_{S^j} (\cdot) $ by uLSIF with $\mathcal{D}_T$ and $\mathcal{D}_{S^j}^{density}$
\EndFor
\State Optimize the hyperparameter $\theta \in \Theta$ of $h_{\theta}$ with $\textbf{OPT}$ by setting $ \hat{f} ( \theta;  \{ \mathcal{D}_{S^j}^{val} \}_{j=1}^{k}) $ as its objective 
\State \quad (the model parameter of $h_{\theta}$ is obtained by optimizing $ \hat{f} ( \theta;  \{ \mathcal{D}_{S^j}^{train} \}_{j=1}^{k}) $)\\
\Return $h_{\theta^{\star}}$ (where $\theta^{\star}$ is the output of $\textbf{OPT}$)
\end{algorithmic}
\end{algorithm*}

\subsection{Hyperparameter Optimization Procedure}
We describe several detailed components of the proposed HPO procedure for the MS-CS setting.

\paragraph{\textbf{Density Ratio Estimation:} }
In general, the density ratio functions between the target and source tasks are unavailable and thus should be estimated beforehand.
To estimate the density ratio, we employ \textit{unconstrained Least-Squares Importance Fitting (uLSIF)}~\cite{kanamori2009least,yamada2011relative}, which minimizes the following squared error for the true density ratio function:
\begin{align}
    \hat{s} 
    & =  \argmin_{s \in \mathcal{S}} \, \mE_{p_{S^j}} \left[ \left( w(X) - s (X) \right)^2 \right] \nonumber \\
    & = \argmin_{s \in \mathcal{S}} \, \left[ \frac{1}{2} \mE_{p_{S^j}} \left[ s^2 (X) \right] - \mE_{p_T} [ s(X) ] \right]
    \label{lsif}
\end{align}
where $\mathcal{S}$ is a class of measurable functions. 
Note that the empirical version of \Eqref{lsif} is calculable with only unlabeled target and source task datasets.

\paragraph{\textbf{Task Divergence Estimation:} }
To utilize the variance reduced estimator, the task divergence measure $\jsim$ in \Eqref{task_divergence} needs to be estimated from the available data. This can be done using the following empirical mean.
\begin{align}
  \widehat{Div} \left( T \, || \, S^j \right)
  = &\frac{1}{n_{S^j}} \sum_{i=1}^{n_{S^j}} \left( w (\ijx) \cdot L ( h_{\theta} (\ijx), \ijy ) \right)^2 \nonumber \\
  &- \left( \frac{1}{n_{S^j}} \sum_{i=1}^{n_{S^j}} w (\ijx) \cdot L ( h_{\theta} (\ijx), \ijy )  \right)^2
\end{align}

\paragraph{\textbf{Data Splitting: }}
To conduct HPO in our setting, we split the dataset of each source task $\mathcal{D}_{S^j}$ into three folds $\mathcal{D}_{S^j}^{density} $, $\mathcal{D}_{S^j}^{train}$, and $\mathcal{D}_{S^j}^{val}$ where each corresponds to the data for the density ratio estimation, training $h_{\theta}$, and evaluating $h_{\theta}$, respectively. First, we use $\mathcal{D}_{S^j}^{density}$ and $\mathcal{D}_T$ to estimate the density ratio function between the source task $j$ and the target task. Second, we use $\mathcal{D}_{S^j}^{train}$ to train $h_{\theta}$ for a given hyperparameter $\theta$ in an HPO procedure. Finally, we use $\mathcal{D}_{S^j}^{val}$ and $\mathcal{D}_T$ to estimate the target task objective and evaluate the performance of $\theta$. The data splitting is needed to ensure the variance optimality of our estimator.

\paragraph{\textbf{Overall Procedure:} } 
Building on the above details, Algorithm 1 summarizes the high-level hyperparameter optimization procedure, which we develop for the MS-CS setting.
In addition, we describe the specific hyperparameter optimization procedure when BO is used as \textbf{OPT} in Algorithm 2.

\begin{algorithm*}[t]
\caption{Bayesian Optimization for the MS-CS setting}
\begin{algorithmic}[1]
\Require unlabeled target task dataset $\mathcal{D}_T = \{x_i\}_{i=1}^{n_T}$; labeled source task datasets  $ \{ \mathcal{D}_{S^j} = \{ x_i^j, y_i^j \}_{i=1}^{n_{S^j}} \}_{j=1}^{k}$; hyperparameter search space $\Theta$; a machine learning model $h_{\theta}$; a target task objective estiamtor $\hat{f}$, number of evaluations $B$, acquisition function $\alpha (\cdot)$
\Ensure the optimized set of hyperparameters $\theta^{\star} \in \Theta$
\State Set $ \mathcal{A}_0 \leftarrow \emptyset $
\For{ $j \in \{1, \ldots, k\}$ }
    \State Split $\mathcal{D}_{S^j}$ into three folds $\mathcal{D}_{S^j}^{density} $, $\mathcal{D}_{S^j}^{train}$, and $\mathcal{D}_{S^j}^{val}$
    \State Estimate density ratio $ w_{S^j} (\cdot) $ by uLSIF with $\mathcal{D}_T$ and $\mathcal{D}_{S^j}^{density}$
\EndFor
\For{$t=1,2, \ldots, B$}
    \State Select $\theta_t$ by optimizing $\alpha (\theta \, | \, \mathcal{A}_{t-1}) $
    \State Train $h_{\theta_t}$ by optimizing $ \hat{f} ( \theta;  \{ \mathcal{D}_{S^j}^{train} \}_{j=1}^{k}) $ and obtain a trained model $h_{\theta}^{*}$
    \State Evaluate $ h_{\theta}^{*} $ and obtain a validation score $z_t = \hat{f} (\theta;  \{ \mathcal{D}_{S^j}^{val} \}_{j=1}^{k}) $
    \State $ \mathcal{A}_t \leftarrow \mathcal{A}_{t-1} \cup \{ (\theta_t, z_t) \} $
\EndFor
\State$t^{\star} = \argmin_t \{z_1, \ldots z_B \}$\\
\Return $h_{\theta^{\star}}$ (where $\theta^{\star} = \theta_{t^{\star}}$)
\end{algorithmic}
\end{algorithm*}

\section{Regret Analysis} \label{sec:regret}

In this section, we analyze the regret bound under the MS-CS setting and prove that our proposed estimator achieves \textbfit{no-regret} HPO.
Here, we define a \textit{regret} as
\begin{align*}
    r_{B}^{n} = f(\hat{\theta}_{B}^{\ast}) - f(\theta^{\ast}),
\end{align*}
where $f: \Theta \rightarrow \mathbb{R} $ is the ground-truth target task objective, $n = \sum_{j=1}^{k} n_{S^j}$ is the total sample size among source tasks, and $B$ is the total number of evaluations. We also let $\theta^{\ast} \in \argmin_{\theta \in \Theta} f(\theta)$, and $\hat{\theta}_{B}^{\ast} \in \argmin_{\theta \in \{ \theta_1, \cdots, \theta_B \}} \hat{f}_{n}(\theta)$ where $\hat{f}_{n} : \Theta \rightarrow \mathbb{R}$ is a target task objective estimated by the $\boldsymbol{\lambda}$-unbiased estimator.
Note that each of $\{ \theta_1, \cdots, \theta_B \}$ is the hyperparameter selected among $B$ evaluations in the optimization.

To derive a regret bound, we first decompose the regret into the following three terms:
\begin{align}
  r_{B}^{n} &= f(\hat{\theta}_{B}^{\ast}) - f(\theta^{\ast}) \nonumber \\
    &= (f(\hat{\theta}^{\ast}_{B}) - \hat{f}_{n}(\hat{\theta}^{\ast}_{B})) + \hat{f}_{n}(\hat{\theta}^{\ast}_{B}) + (\hat{f}_{n}(\hat{\theta}^{\ast}) - f(\theta^{\ast})) - \hat{f}_{n}(\hat{\theta}^{\ast}) \nonumber \\
    &= \underbrace{(\hat{f}_{n}(\hat{\theta}^{\ast}_{B}) - \hat{f}_{n}(\hat{\theta}^{\ast}))}_{(A)} + \underbrace{(f(\hat{\theta}^{\ast}_{B}) - \hat{f}_{n}(\hat{\theta}^{\ast}_{B}))}_{(B)} + \underbrace{(\hat{f}_{n}(\hat{\theta}^{\ast}) - f(\theta^{\ast}))}_{(C)}, \label{regret_bound_ABC}
\end{align}
where $\hat{\theta}^{\ast} \in \argmin_{\theta \in \Theta} \hat{f}_{n}(\theta)$.

The term (A) is the \emph{simple} regret obtained by optimizing the estimated target task objective $\hat{f}_{n}$.
The term (B) is the difference between the true objective $f$ and the estimated objective $\hat{f}_{n}$ at $\hat{\theta}^{\ast}_{B}$, which is the solution of $\hat{f}_n (\theta)$.
The term (C) is the difference between the minimum value of the estimated objective $\hat{f}_{n}$ and that of the true objective $f$.
This decomposition is essential because we can bound the overall regret by finding a bound for each term.

We first state two lemmas, which will be used to bound the regret.

\begin{lemma}
\label{lemma:B}
The following inequality holds with a probability of at least $ 1 - \delta \, (\delta \in (0, 1)),$
\begin{align*}
    (f(\hat{\theta}^{\ast}_{B}) - \hat{f}_{n}(\hat{\theta}^{\ast}_{B})) \leq \sqrt{ \mathbb{V}(\hat{f}_{n}(\hat{\theta}_B^{\ast})) / \delta}.
\end{align*}
\begin{proof}
By Chebyshev's inequality, we have
\begin{align*}
    \mP \{ f(\hat{\theta}_B^{\ast}) - \hat{f}_{n}(\hat{\theta}_B^{\ast}) \geq c \}
    &\leq \mP \{ | f(\hat{\theta}_B^{\ast}) - \hat{f}_{n}(\hat{\theta}_B^{\ast}) | \geq c \} \\
    &\leq \mathbb{V}(\hat{f}_{n}(\hat{\theta}_B^{\ast})) / c^2.\\
\end{align*}
Putting the RHS as $\delta$ and solving it for $c$ completes the proof.
\end{proof}
\label{boundB}
\end{lemma}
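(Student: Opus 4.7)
The plan is to apply Chebyshev's inequality to the random quantity $\hat{f}_{n}(\hat{\theta}_B^{\ast}) - f(\hat{\theta}_B^{\ast})$, leveraging the unbiasedness of any $\boldsymbol{\lambda}$-unbiased estimator established in Section 3.1. Since $\hat{f}_n$ belongs to this class, we have $\mE[\hat{f}_n(\theta)] = f(\theta)$, so $f(\hat{\theta}_B^{\ast})$ is exactly the mean of $\hat{f}_n(\hat{\theta}_B^{\ast})$ and the deviation can be controlled by its variance.

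I would carry out three short steps. First, I would weaken the one-sided deviation to the two-sided one, $\mP\{f(\hat{\theta}_B^{\ast}) - \hat{f}_n(\hat{\theta}_B^{\ast}) \geq c\} \leq \mP\{|f(\hat{\theta}_B^{\ast}) - \hat{f}_n(\hat{\theta}_B^{\ast})| \geq c\}$, which is free since the event on the left is contained in the event on the right. Second, I would invoke Chebyshev's inequality: because $f(\hat{\theta}_B^{\ast}) = \mE[\hat{f}_n(\hat{\theta}_B^{\ast})]$, the standard bound gives $\mP\{|\hat{f}_n(\hat{\theta}_B^{\ast}) - \mE[\hat{f}_n(\hat{\theta}_B^{\ast})]| \geq c\} \leq \mV(\hat{f}_n(\hat{\theta}_B^{\ast}))/c^2$. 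Third, I would set the right-hand side equal to $\delta$ and solve $\mV(\hat{f}_n(\hat{\theta}_B^{\ast}))/c^2 = \delta$ for $c$, obtaining $c = \sqrt{\mV(\hat{f}_n(\hat{\theta}_B^{\ast}))/\delta}$. Taking complements then yields the claim that with probability at least $1-\delta$ the deviation is bounded by this $c$.

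The main subtlety, rather than a deep technical obstacle, is that $\hat{\theta}_B^{\ast}$ is itself a function of the data used to build $\hat{f}_n$, so Chebyshev's inequality cannot be applied as if $\hat{\theta}_B^{\ast}$ were a fixed configuration. The cleanest way to justify the step is to invoke the data-splitting convention from Section 3.3: the selection of candidate hyperparameters depends only on the training fold, while $\hat{f}_n$ is evaluated on the independent validation fold, so conditionally on $\hat{\theta}_B^{\ast}$ the estimator remains unbiased with the same variance expression, and Chebyshev applies pointwise. Alternatively one could take a union bound over the $B$ configurations visited during optimization, at the cost of inflating the variance term by a factor of $B$. Either route leaves the final inequality intact up to this conditioning, after which the remaining algebra is routine.
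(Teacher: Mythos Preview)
Your proof is correct and follows exactly the same route as the paper: bound the one-sided deviation by the two-sided one, apply Chebyshev's inequality using unbiasedness of $\hat{f}_n$, then set the right-hand side to $\delta$ and solve for $c$. The paper does not address the data-dependence subtlety you flag regarding $\hat{\theta}_B^{\ast}$, so your discussion there goes beyond what the original proof provides.
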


\begin{lemma} 
\label{lemma:C}
The following inequality holds with a probability of at least $ 1 - \delta \, (\delta \in (0, 1)),$
\begin{align*}
\hat{f}_{n}(\hat{\theta}^{\ast}) - f(\theta^{\ast}) \leq \sqrt{(\mathbb{V}(\hat{f}_{n}(\theta^{\ast})) + \mathbb{V}(\hat{f}_{n}(\hat{\theta}^{\ast}))) / \delta}.
\end{align*}
\begin{proof}
By Chebyshev's inequality, we have
\begin{align*}
  &\mP \{ \hat{f}_{n}(\hat{\theta}^{\ast}) - f(\theta^{\ast}) \geq c \} \nonumber \\
  &\leq \mP \{ | \hat{f}_{n}(\hat{\theta}^{\ast}) - f(\theta^{\ast}) | \geq c \} \nonumber \\
  &\leq \mP \{ | \hat{f}_{n}(\theta^{\ast}) - f(\theta^{\ast}) | \geq c \cup | \hat{f}_{n}(\hat{\theta}^{\ast}) - f(\hat{\theta}^{\ast})| \geq c \}  \\
  &\leq \mP \{ | \hat{f}_{n}(\theta^{\ast}) - f(\theta^{\ast}) | \geq c \} + \mP \{ | \hat{f}_{n}(\hat{\theta}^{\ast}) - f(\hat{\theta}^{\ast})| \geq c \}  \\
  &\leq \frac{1}{c^2} (\mathbb{V}(\hat{f}_{n}(\theta^{\ast})) + \mathbb{V}(\hat{f}_{n}(\hat{\theta}^{\ast}))).
\end{align*}
Putting the RHS as $\delta$ and solving it for $c$ completes the proof.
\end{proof}
\label{boundC}
\end{lemma}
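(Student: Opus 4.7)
The plan is to convert a statement about the difference of two minima into tail bounds on pointwise deviations of the $\boldsymbol{\lambda}$-unbiased estimator, for which the unbiasedness property $\mathbb{E}[\hat{f}_{n}(\theta)] = f(\theta)$ at any fixed $\theta$ combined with Chebyshev's inequality should suffice. The fact that the right-hand side of the bound involves the estimator variance rather than any boundedness constant strongly suggests Chebyshev as the tool of choice, matching the style of the preceding lemma.

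The first step is to establish the pointwise inequality
\begin{align*}
|\hat{f}_{n}(\hat{\theta}^{\ast}) - f(\theta^{\ast})| \leq \max \bigl( |\hat{f}_{n}(\theta^{\ast}) - f(\theta^{\ast})|, \, |\hat{f}_{n}(\hat{\theta}^{\ast}) - f(\hat{\theta}^{\ast})| \bigr),
\end{align*}
which I would prove by case-splitting on the sign of $\hat{f}_{n}(\hat{\theta}^{\ast}) - f(\theta^{\ast})$. When the sign is non-negative, the optimality inequality $\hat{f}_{n}(\hat{\theta}^{\ast}) \leq \hat{f}_{n}(\theta^{\ast})$ sandwiches the absolute value below the first argument; when it is negative, the optimality inequality $f(\theta^{\ast}) \leq f(\hat{\theta}^{\ast})$ sandwiches it below the second. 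Both inequalities are just the definitions of $\hat{\theta}^{\ast}$ and $\theta^{\ast}$ as minimizers of $\hat{f}_{n}$ and $f$, respectively; this is the only place the minimizer structure gets used.

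The second step is probabilistic. Since $\hat{f}_{n}(\hat{\theta}^{\ast}) - f(\theta^{\ast}) \leq |\hat{f}_{n}(\hat{\theta}^{\ast}) - f(\theta^{\ast})|$ and the latter is bounded by the maximum above, the event $\{\hat{f}_{n}(\hat{\theta}^{\ast}) - f(\theta^{\ast}) \geq c\}$ is contained in the union of the two pointwise deviation events. A union bound followed by Chebyshev's inequality applied at each fixed evaluation point gives
\begin{align*}
\mathbb{P}\{ \hat{f}_{n}(\hat{\theta}^{\ast}) - f(\theta^{\ast}) \geq c \} \leq \frac{\mathbb{V}(\hat{f}_{n}(\theta^{\ast})) + \mathbb{V}(\hat{f}_{n}(\hat{\theta}^{\ast}))}{c^2},
\end{align*}
and equating the right-hand side to $\delta$ and solving for $c$ delivers the claim.

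The main subtlety I expect is the second deviation $|\hat{f}_{n}(\hat{\theta}^{\ast}) - f(\hat{\theta}^{\ast})|$: because $\hat{\theta}^{\ast}$ is itself a data-dependent minimizer, one cannot apply Chebyshev ``at a fixed point'' to it in a literal sense. In a careful write-up I would either condition on the realized value of $\hat{\theta}^{\ast}$ before invoking Chebyshev (exploiting that unbiasedness holds at every fixed $\theta \in \Theta$), or interpret $\mathbb{V}(\hat{f}_{n}(\hat{\theta}^{\ast}))$ as a worst-case variance over $\Theta$ and deploy a uniform-in-$\theta$ tail bound; both routes recover the stated inequality while sidestepping the randomness of the evaluation point.
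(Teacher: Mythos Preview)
Your proposal is correct and follows essentially the same route as the paper: bound the signed difference by its absolute value, show that the resulting event is contained in the union of the two pointwise deviation events, apply the union bound, then Chebyshev at each point, and solve for $c$. In fact you supply the justification the paper omits---the paper simply asserts the containment $\{|\hat{f}_{n}(\hat{\theta}^{\ast}) - f(\theta^{\ast})| \geq c\} \subseteq \{|\hat{f}_{n}(\theta^{\ast}) - f(\theta^{\ast})| \geq c\} \cup \{|\hat{f}_{n}(\hat{\theta}^{\ast}) - f(\hat{\theta}^{\ast})| \geq c\}$ without explanation, whereas your case-split on the sign, using the two optimality inequalities, is exactly what makes that step valid. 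Your closing remark about the data-dependence of $\hat{\theta}^{\ast}$ is also well taken: the paper applies Chebyshev at $\hat{\theta}^{\ast}$ as if it were fixed, which is a genuine gap you are right to flag rather than a defect in your own argument.
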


Then, we state the main theorem.

\begin{theorem}(Regret Bound for MS-CS)  When the $\boldsymbol{\lambda}$-unbiased estimator with an arbitrary set of weights $\boldsymbol{\lambda}$ is used as $\hat{f} (\theta;  \{ \mathcal{D}_{S^j}^{val} \}_{j=1}^{k})$, the following regret bound holds with a probability of at least $ 1 - \delta \, (\delta \in (0, 1)),$
\begin{align}
  \label{eq:regret_theorem}
  r_{B}^{n} \leq R_n + \sqrt{2 \mathbb{V}(\hat{f}_{n}(\hat{\theta}_B^{\ast})) / \delta} + \sqrt{2 (\mathbb{V}(\hat{f}_{n}(\theta^{\ast})) + \mathbb{V}(\hat{f}_{n}(\hat{\theta}^{\ast}))) / \delta},
\end{align}
where $R_n = \hat{f}_{n}(\hat{\theta}^{\ast}_{B}) - \hat{f}_{n}(\hat{\theta}^{\ast})$.
\end{theorem}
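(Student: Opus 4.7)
The plan is to combine the decomposition in Eq.~(\ref{regret_bound_ABC}) with the two lemmas, using a union bound argument to pay for the fact that we need both high-probability events to hold simultaneously. Observe that term (A) in the decomposition is exactly $R_n$ by definition, so this term contributes deterministically and requires no probabilistic control. Only terms (B) and (C) need to be bounded, and for each of them we already have a Chebyshev-based bound from Lemma~\ref{boundB} and Lemma~\ref{boundC}, respectively.

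The key idea is to invoke Lemma~\ref{boundB} at confidence parameter $\delta/2$ to obtain
\begin{align*}
    f(\hat{\theta}^{\ast}_{B}) - \hat{f}_{n}(\hat{\theta}^{\ast}_{B}) \leq \sqrt{2\, \mathbb{V}(\hat{f}_{n}(\hat{\theta}_B^{\ast})) / \delta}
\end{align*}
with probability at least $1 - \delta/2$, and symmetrically invoke Lemma~\ref{boundC} at confidence parameter $\delta/2$ to obtain
\begin{align*}
    \hat{f}_{n}(\hat{\theta}^{\ast}) - f(\theta^{\ast}) \leq \sqrt{2\, (\mathbb{V}(\hat{f}_{n}(\theta^{\ast})) + \mathbb{V}(\hat{f}_{n}(\hat{\theta}^{\ast}))) / \delta}
\end{align*}
with probability at least $1 - \delta/2$. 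The factor of $2$ that appears inside the square roots in the statement of the theorem is precisely the price of splitting the total failure budget $\delta$ evenly across the two Chebyshev applications.

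The last step is to apply the union bound: the complement event (either Lemma~\ref{boundB} or Lemma~\ref{boundC} fails) occurs with probability at most $\delta/2 + \delta/2 = \delta$, so both inequalities hold simultaneously with probability at least $1 - \delta$. On this joint event, adding the bounds on (A), (B), and (C) from the decomposition in Eq.~(\ref{regret_bound_ABC}) yields exactly the stated regret bound in Eq.~(\ref{eq:regret_theorem}).

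I do not anticipate a serious technical obstacle here, since the two lemmas have already done the heavy lifting; the only subtle point is the rescaling of $\delta$ to $\delta/2$ in each lemma so that the union bound produces the final confidence level $1 - \delta$ rather than $1 - 2\delta$. One may also want to remark that the bound holds for \emph{any} choice of $\boldsymbol{\lambda}$-unbiased estimator, and in particular plugging in the variance reduced estimator $\hat{f}_{VR}$ minimizes each of the variance terms by Theorem~\ref{theorem1}, which is what justifies calling the resulting procedure no-regret as $n \to \infty$ whenever the variance of $\hat{f}_{VR}$ vanishes.
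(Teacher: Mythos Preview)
Your proposal is correct and follows essentially the same approach as the paper: decompose the regret via Eq.~(\ref{regret_bound_ABC}), identify term (A) as $R_n$, and bound terms (B) and (C) with Lemma~\ref{boundB} and Lemma~\ref{boundC}. Your write-up is in fact more explicit than the paper's, which simply says ``Putting Lemma~\ref{lemma:B} to the term (B) \ldots and Lemma~\ref{lemma:C} to the term (C) \ldots complete the proof'' without spelling out the $\delta/2$ split and union bound that account for the factor $2$ under the square roots.
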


\begin{proof}
Putting Lemma~\ref{lemma:B} to the term (B) in Eq.~(\ref{regret_bound_ABC}) and Lemma~\ref{lemma:C} to the term (C) in Eq.~(\ref{regret_bound_ABC}) complete the proof.
\end{proof}

When $\hat{f}_n$ is the proposed variance reduced estimator, $\mathbb{V}(\hat{f}_{n}(\cdot)) = o(1)$.
This means that the second and third terms in Eq.~(\ref{eq:regret_theorem}) are {\emph no-regret}~\citep{srinivas2010gaussian} with respect to $n$, i.e.,  $$\lim_{n \to \infty} \sqrt{2 \mathbb{V}(\hat{f}_{n}(\hat{\theta}_B^{\ast})) / \delta} + \sqrt{2 (\mathbb{V}(\hat{f}_{n}(\theta^{\ast}) + \mathbb{V}(\hat{f}_{n}(\hat{\theta}^{\ast}))) / \delta}) = 0.$$
Therefore, if the HPO optimization method is no-regret with respect to the number of evaluations $B$, i.e., $R_n = o(1)$, the overall regret $r_{B}^{n}$ diminishes as $n$ and $B$ increase.
Indeed, under some mild conditions, we can achieve $R_n = o(1)$ using no-regret optimization methods such as GP-UCB~\citep{srinivas2010gaussian} and GP-MES~\citep{wang2017max}.
Srinivas et al.~\citep{srinivas2010gaussian} indicate that GP-UCB is no-regret in the sense of \emph{cumulative} regret, and it is straightforward to apply the theoretical result to our notion of regret as done in \cite{kandasamy2018parallelised,kandasamy2019multi}.

\section{Experiments}
We investigate the behavior of our proposed HPO procedure in MS-CS using a toy problem in Section~\ref{sec:toy} and real-world datasets in Section~\ref{sec:hpo_real}.
The code for reproducing the results of the experiments is available at \href{https://github.com/nmasahiro/MS-CS}{\textbf{https://github.com/nmasahiro/MS-CS}}.

\subsection{Baseline Methods}
We compare the following four methods as possible baselines for MS-CS:
\textbf{(i)} {\it Learning Initialization (LI)}~\citep{wistuba2015learning},
\textbf{(ii)} {\it DistBO}~\citep{law2019hyperparameter},
\textbf{(iii):} {\it Naive} method, which uses the performance on the concatenation of source tasks as a validation score, 
\textbf{(iv)} {\it Oracle} method, which uses the labeled target task for HPO. Note that the oracle method is infeasible in MS-CS, and we regard the performance of the oracle method as an upper bound to which other methods can reach.

Learning Initialization (LI)~\cite{wistuba2015learning} aims to find promising hyperparameters by minimizing a meta-loss function.
The meta-loss function is defined as the sum of a surrogated loss function on each source task.
Intuitively, by minimizing the meta-loss function, LI can obtain hyperparameters that can lead to a good performance on average for source tasks.

DistBO transfers knowledge across tasks using learnt representations of training datasets~\citep{law2019hyperparameter}.
To measure the similarity between these tasks, DistBO uses a distributional kernel, which learns the similarity between source and target tasks by a joint Gaussian process model.
At the first iteration of optimization for the target task, DistBO uses LCB (Lower Confidence Bound) as the acquisition function to quickly select good hyperparameters. 
DistBO models a joint distribution $p(x,y)$ of each task.
However, in MS-CS, modeling $p_T(x,y)$ is impossible because the labeled data of the target task are unavailable.
Therefore, in our experiments, DistBO models the marginal distribution $p(x)$, not the joint distribution $p(x,y)$, of each task.
This setting is also used in the original paper (Section 5.1 of~\citep{law2019hyperparameter}).
If the covariate shift assumption holds, it is sufficient to model $p(x)$, as $p(y|x)$ does not change across tasks.

\subsection{Experimental Setting}
For fair comparison, we used Gaussian Process Upper Confidence Bound (GP-UCB)~\citep{srinivas2010gaussian} as a hyperparameter optimization algorithm for all methods.
In our experiments, the confidence parameter in GP-UCB is set to $2.0$ (following the setting used in~\citep{nguyen2016budgeted,gonzalez2016batch,bogunovic2018adversarially}).

We set the number of evaluations (i.e., $B$ in Algorithm 1 and 2) to $50$.
At the beginning of optimization, we sample $5$ initial points randomly.
A Mat\'{e}rn $5/2$ kernel is used in the implementation of GP-UCB.
Note that our study is formulated as a minimization as in \Eqref{eq:objective}, we thus use LCB instead of UCB as an acquisition function.
We use {\it densratio\_py}\footnote{\href{https://github.com/hoxo-m/densratio\_py}{https://github.com/hoxo-m/densratio\_py}} to estimate the density ratio by uLSIF~\citep{kanamori2009least}.
All the experiments were conducted on Google Cloud Platform (n1-standard-4) or MacBook Pro (2.2 GHz Intel Core i7, 16 GB).

For LI, which utilizes the gradient descent algorithm to optimize a meta-loss function defined by source tasks, we need to set a learning rate $\eta$ and a number of epochs $E$.
Following~\citep{wistuba2015learning}, we set $\eta = 10^{-3}$ and $E = 10^3$.
To obtain the results of DistBO in our experiments, we use the implementation provided by the authors.\footnote{https://github.com/hcllaw/distBO}
We use a Mat\'{e}rn $5/2$ kernel for a fair comparison with other methods, whereas the original implementation uses a Mat\'{e}rn $3/2$ kernel as default.

\begin{figure*}[t]
    \centering
    \hspace*{\fill}
    \subfloat[][Comparing all methods \label{subfig:synthetic}]{
        \includegraphics[width=0.49\linewidth]{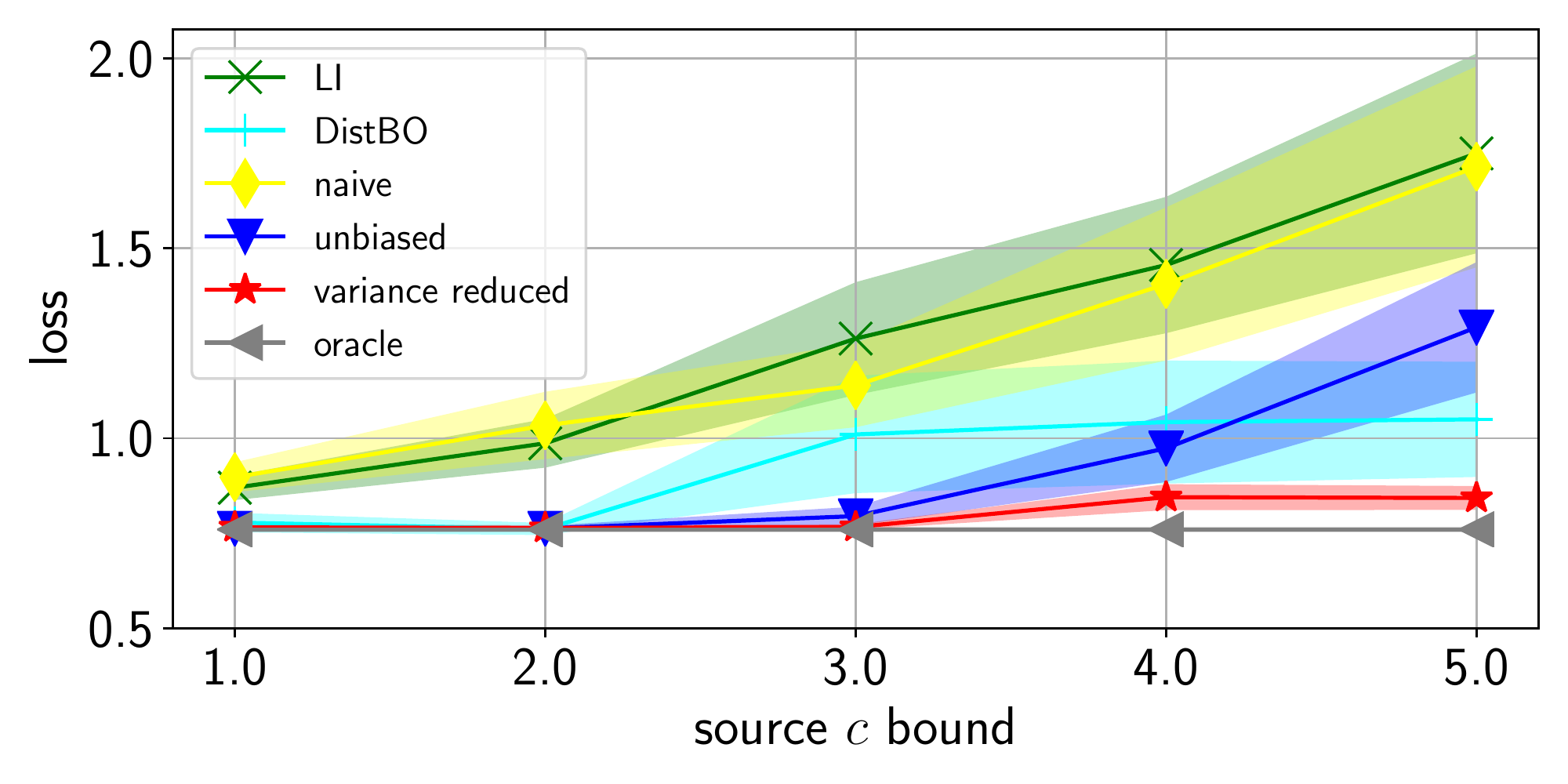}
    }
    \hspace*{\fill}
    \subfloat[][Comparing unbiased and variance reduced \label{subfig:synthetic_unbiased_vr_ratio}]{
        \includegraphics[width=0.49\linewidth]{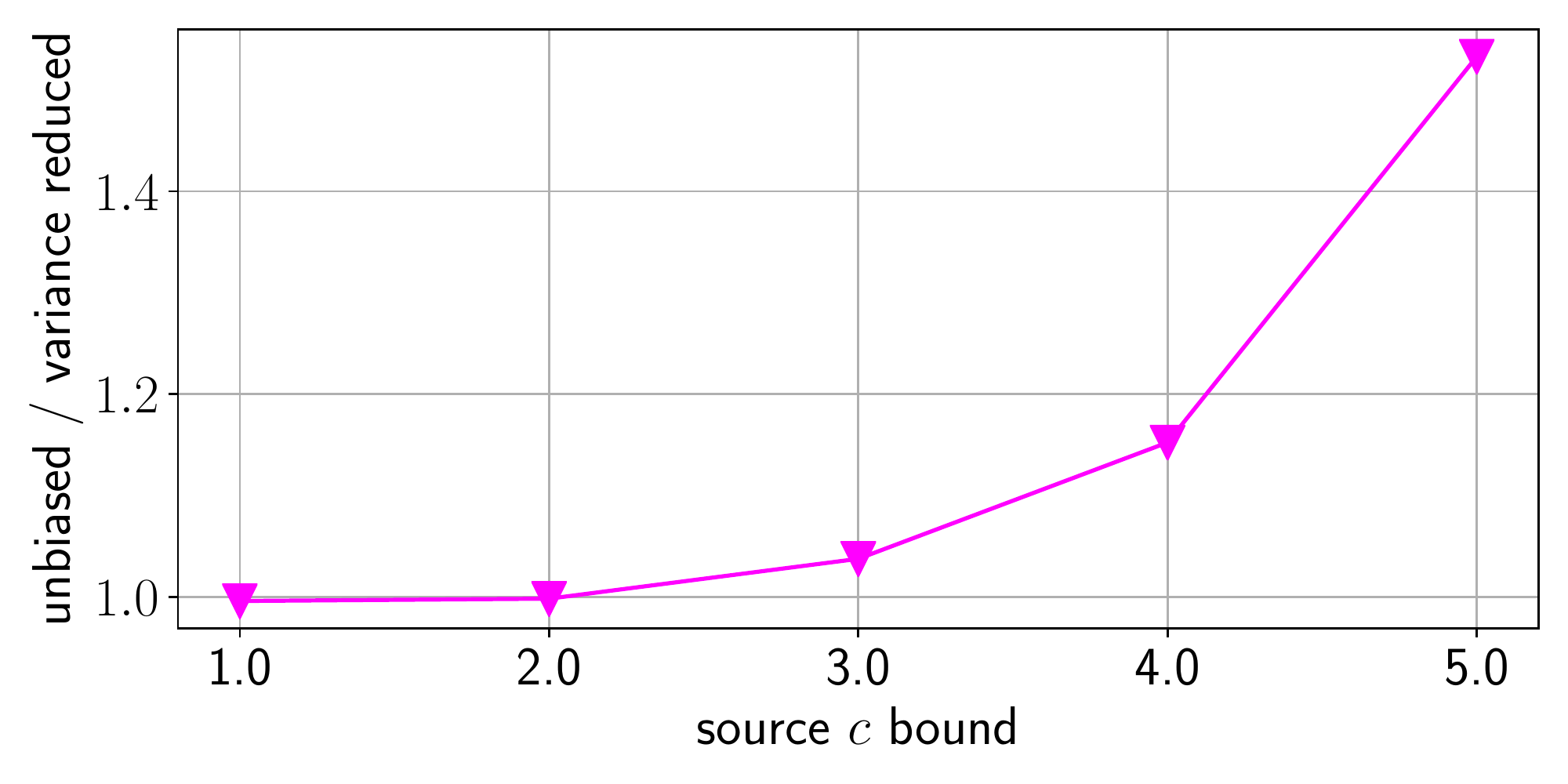}
    }
\hspace*{\fill}
\caption{Results of the experiment on synthetic toy problems over $30$ runs. The horizontal axis represents the prior parameters of the source tasks $c_j^S \in \{ 1.0, \cdots, 5.0 \},$ $\ (j \in \{ 1, \cdots, k \})$. (a) The vertical axis represents the mean and standard error of the performance of each method. (b) The vertical axis represents the ratio of the mean performance of the unbiased estimator to the variance reduced estimators.
} 
\label{fig:toy}
\end{figure*}

\subsection{Toy Problem} \label{sec:toy}
We consider a $1$-dimensional regression problem which aligns with the MS-CS setting.
The toy dataset is generated as follows:
\begin{align*}
    \mu^j &\sim \mathcal{U}(-c_j, c_j), \, \\
    \{ x^j_i \}_{i=1}^{n} \mid \mu^j &\overset{i.i.d.}{\sim} \mathcal{N}(\mu^{j}, 1),  \, \\
    \{ y^j_i \}_{i=1}^{n} \mid \{ x^j_i \}_{i=1}^{n} &\overset{i.i.d.}{\sim} \{ \mathcal{N}(0.7 x^j_i + 0.3, 1) \}_{i=1}^{n}, \nonumber
\end{align*}
where $ \mathcal{U} $ is the uniform distribution, $\mathcal{N}$ denotes the normal distribution, and $c_j \in \mathbb{R}$ is an experimental setting which characterizes the marginal input distribution ($p(x)$) of task $j$.
The objective function $f$ is given by:
\begin{align}
    f(\theta; \mathcal{D}_j) = \frac{1}{n} \sum_{i=1}^n L(\theta, y_i),\ L(\theta, y_i)= (\theta - y_i)^2 / 2.
\end{align}
Similar to the toy experiment in~\citep{law2019hyperparameter}, $\theta \in [-8, 8]$ is a hypothetical hyperparameter, which we aim to optimize.

As we described in Section~\ref{sec:method}, the variance reduced estimator is expected to outperform the unbiased estimator, when the source task and the target task differ significantly.
To demonstrate this, we use various values as $c_j$ to define source tasks ($c_j^S \in \{ 1.0, 2.0, \cdots, 5.0 \}, j \in \{1, \cdots, k\}$) and target task ($c^T = 1.0$).
The source and target distributions are similar when $c_j^S = 1.0 (= c^T)$; in contrast, they are quite different when $c_j^S = 5.0$.
Finally, we set $k = 2$ and $n = 1000$.

Figure~\ref{fig:toy} shows the results of the experiment on the toy problem over $30$ runs with different random seeds.
First, Figure~\ref{fig:toy} (a) indicates that the unbiased and variance reduced estimators significantly outperform the naive method and LI in all settings. 
This is because these estimators unbiasedly approximate the target task objective by addressing the distributional shift, while the naive method and LI do not.
The variance reduced estimator also outperforms DistBO when there is a source task that is greatly different from the target (when $c$ is large).

Next, we compare the performance of the unbiased and variance reduced estimators in Figure~\ref{fig:toy} (b).
This reports the performance of the unbiased estimator relative to the variance reduced estimator with varying values of $c$.
The result indicates that the advantage of the variance reduced estimator over the unbiased estimator is further strengthened when there is a large divergence between the target and source task distributions.
This result is consistent with our theoretical analysis, which suggests that the variance reduced estimator has the minimum variance.
Finally, as shown in Figure~\ref{fig:toy} (a), the variance reduced estimator achieves almost the same performance as the upper bound (oracle) without using the labels of the target task, suggesting its powerful HPO performance under the MS-CS setting.

\subsection{Real-World Data} \label{sec:hpo_real}

\paragraph{\textbf{Datasets:} }
We use \href{http://archive.ics.uci.edu/ml/datasets/Parkinsons+Telemonitoring}{Parkinson's telemonitoring} (Parkinson)~\citep{tsanas2009accurate} and Graft-versus-host disease (GvHD) datasets~\citep{brinkman2007high} to evaluate our proposed method on real-world problems.

Parkinson consists of voice measurements of $42$ patients with early-stage Parkinson disease collected by using a telemonitoring device in remote symptom progression monitoring.
Each patient has about $150$ recordings characterized by a feature vector with $17$ dimensions.
The goal is to predict the Parkinson disease symptom score of each recording.

GvHD is an important medical problem in the allogeneic blood transplantation field~\citep{brinkman2007high}.
The issue occurs in allogeneic hematopoietic stem cell transplant recipients when donor immune cells in the graft recognize the recipient as foreign and initiate an attack on several tissues. 
The GvHD dataset contains weekly peripheral blood samples obtained from 31 patients characterized by a feature vector with $7$ dimensions.
Following~\citep{muandet2013domain}, we omit one patient who has insufficient data, and subsample the data of each patient to have 1000 data points each.
The goal is to classify \text{CD3+CD4+CD8+} cells, which have a high correlation with the development of the disease~\citep{brinkman2007high}.

\begin{table*}[h]
    \centering
    \caption{Hyperparameter space for SVM on Parkinson's telemonitoring dataset.}
     \label{tab:hp_svm}
    \setlength\tabcolsep{2mm}
    \begin{tabularx}{\textwidth}{p{15em}p{4em}p{4em}p{10em}}
    \toprule
         Hyperparameters & Type & Scale & Search Space \\ 
         \midrule
         Kernel Coefficient             & float & log & $[5.0 \times 10^{-5}, 5.0 \times 10^{3}]$\\
         L2 Regularization Parameter    & float & log & $[5.0 \times 10^{-5}, 5.0 \times 10^{3}]$\\
         \bottomrule
    \end{tabularx}
\end{table*}

\begin{table*}[h]
    \centering
    \caption{Hyperparameter space for LightGBM on GvHD dataset.}
     \label{tab:hp_lgbm}
    \setlength\tabcolsep{2mm}
    \begin{tabularx}{\textwidth}{p{15em}p{4em}p{4em}p{10em}}
    \toprule
         Hyperparameters & Type & Scale & Search Space \\ 
         \midrule
         Max Depth for Tree             & int & linear & $[2, 6]$\\
         Feature Fraction               & float & linear & $[0.1, 1.0]$\\
         Learning Rate                  & float & log & $[1.0 \times 10^{-3}, 1.0 \times 10^{-1}]$\\
         L2 Regularization Parameter    & float & log & $[5.0 \times 10^{-5}, 5.0 \times 10^{3}]$\\
         \bottomrule
    \end{tabularx}
\end{table*}

\paragraph{\textbf{Experimental Procedure:} }
To create the MS-CS setting, we treat each patient as a task for both datasets.
We select one patient as a target task and regard the remaining patients as multiple source tasks.
Then, we employ the following experimental procedure:
\begin{enumerate}
    \item Tune hyperparameters of \textbf{an ML model} by \textbf{an HPO method} using the unlabeled target task and labeled source tasks,
    \item Split the original target task data into $70\%$ training set and $30\%$ test set,
    \item Train \textbf{an ML model} tuned by \textbf{an HPO method} for MS-CS using the training set of the target task, 
    \item Predict target variables (the symptom score for Parkinson and \text{CD3+CD4+CD8+} cells for GvHD) on the test set of the target patient, 
    \item Calculate \textbf{target task objective} (prediction accuracy) and regard it as the performance of the HPO method,
    \item Repeat the above steps 10 times with different seeds and report the mean and standard error over the simulations. 
\end{enumerate} 

Note that \textbf{an HPO method} can be one of LI, DistBO, Naive, Unbiased, Variance Reduced, or Oracle. 
\textbf{An ML model} is either of support vector machine (SVM) or LightGBM as described below.

\paragraph{\textbf{Settings:} }
As for an ML model and a target task objective, we use SVM implemented in \textit{scikit-learn}~\citep{pedregosa2011scikit} and \textit{mean absolute error} (MAE) for Parkinson. 
In contrast, we use LightGBM~\citep{ke2017lightgbm} as an ML model and \textit{binary cross-entropy} (BCE) as a target task objective for GvHD.

Tables~\ref{tab:hp_svm} and~\ref{tab:hp_lgbm} describe the hyperparameter search spaces ($\Theta$) for SVM (Parkinson) and LightGBM (GvHD).
We treat integer-valued hyperparameters as a continuous variable and discretize them when defining an ML model.
For SVM, we use Radial Basis Function kernel in {\it scikit-learn}~\citep{pedregosa2011scikit}.
We use {\it microsoft/LightGBM}\footnote{\href{https://github.com/microsoft/LightGBM}{https://github.com/microsoft/LightGBM}} to implement LightGBM.
For the details of these hyperparameters, please refer to the documentations of scikit-learn\footnote{\href{https://scikit-learn.org/stable/modules/generated/sklearn.svm.SVR.html}{https://scikit-learn.org/stable/modules/generated/sklearn.svm.SVR.html}} and LightGBM\footnote{\href{https://lightgbm.readthedocs.io/en/latest/Parameters.html}{https://lightgbm.readthedocs.io/en/latest/Parameters.html}}.

We normalize the feature vectors of the GvHD dataset as a preprocessing procedure.
In contrast, in the Parkinson dataset, we do not apply standardization or normalization, because these operations cause serious performance degradation of SVM.
In the Parkinson dataset, we select a task (patient) having the maximum number of data as the target task. 
In contrast, the tasks in the GvHD dataset all have the same number of data, thus we select the task (patient) with the first task index as the target.
When we use unbiased and variance reduced estimators, we use $30\%$ of the training set to estimate the density ratio and the remaining $70\%$ to learn the ML models as suggested in Section~\ref{sec:method}.

\begin{table}[ht]
\centering
\caption{Comparing different MS-CS methods (\std{Mean}{StdErr}). The \best{red} fonts represent the best performance among estimators that are feasible in MS-CS. The mean and standard error (StdErr) are induced by running 10 simulations with different random seeds.}
\def\arraystretch{1.15}
\resizebox{\linewidth}{!}{
\begin{tabular}{lcccc}
\toprule
\textbf{Estimators} & & \multicolumn{1}{c}{\textbf{Parkinson (MAE)}} && \multicolumn{1}{c}{\textbf{GvHD (BCE)}}\\
\midrule \midrule
LI && \std{0.41507}{0.1669} && \std{0.19695}{0.0468} \\
DistBO && \std{1.54202}{0.1006} && \std{0.33015}{0.0600} \\
Naive && \std{1.10334}{0.0908}  && \std{0.02121}{0.0052} \\
Unbiased && \std{1.08283}{0.1981} && \std{0.02141}{0.0052} \\
Variance reduced (ours) && \best{\std{0.40455}{0.1755}} && \best{\std{0.01791}{0.0039}} \\ \midrule
Oracle (reference) && \std{0.06862}{0.0011} && \std{0.01584}{0.0043} \\ \bottomrule
\end{tabular}}
\label{parkinson}
\end{table}

\paragraph{\textbf{Results:} }
Table~\ref{parkinson} presents the results of the experiments over 10 runs with different random seeds.
In contrast to the results on synthetic data, the performance of DistBO deteriorates significantly.
While DistBO requires a reasonable number of hyperparameter evaluations per source task, our setting allows only a very small number of evaluations per source task, which may lead to learning inaccurate surrogate models for DistBO.
The unbiased estimator performs almost the same with naive on the Parkinson dataset given their standard errors.
Moreover, it slightly underperforms the naive in GvHD, although the unbiased estimator satisfies unbiasedness.
This is because the number of data for each task is small, and the variance issue of the unbiased estimator is highlighted in these data.
This result implies that pursuing only unbiasedness in the approximation of the target task objective is not sufficient in MS-CS.
On the other hand, the proposed variance reduced estimator alleviates the instability issue of the unbiased estimator and performs the best in both datasets.
The results also suggest that the variance reduced estimator works well on both regression (Parkinson) and classification (GvHD) problems.
Therefore, we conclude from its variance optimality and empirical performance that the variance reduced estimator is the best choice to conduct efficient HPO under MS-CS.


\section{Conclusion}
In this work, we explore a novel HPO problem under MS-CS with the goal of enabling efficient HPO with only an unlabeled target task and \textit{multiple} labeled source task datasets. Towards that end, we propose the variance reduced estimator and show that it achieves the \textit{variance optimality} by leveraging the \textit{task divergence measure}. Moreover, we show that the resulting HPO procedure is \textbfit{no-regret} even under the difficult MS-CS setting.
Empirical evaluation demonstrated that the proposed HPO procedure helps us identify useful hyperparameters without the labels of the target task.

In assumption~\ref{assumption2}, we assumed that the conditional outcome distribution remains the same across all tasks. 
Although it is a standard assumption in the covariate shift literature, it might not hold in uncertain real-world scenarios. 
Exploring the effective HPO procedure without this assumption, for example, by leveraging a few labeled samples from the target task, is an interesting future research direction.
Another limitation of our study is that the scales of the HPO experiments are small. 
This is due to the fact that there are limited real-world datasets that enable the experiments under MS-CS.
We argue that empirically verifying our proposed HPO procedure on larger datasets is necessary to make our procedure more convincing and reliable.

\bibliographystyle{ACM-Reference-Format}
\bibliography{main}

\end{document}